\documentclass[10pt,twocolumn,letterpaper]{article}

\usepackage{iccv}
\usepackage{times}
\usepackage{epsfig}
\usepackage{graphicx}
\usepackage{amsmath}
\usepackage{amssymb}
\usepackage[title]{appendix}

\usepackage{amsthm}
\newtheorem{theorem}{Theorem}
\newtheorem{lemma}[theorem]{Lemma}
\usepackage{mathtools}
\usepackage{tabularx,booktabs}
\newcolumntype{Y}{>{\centering\arraybackslash}X}
\DeclareMathOperator*{\maxp}{max_{\mathnormal{}}}
\DeclareMathOperator*{\avgp}{avg_{\mathnormal{}}}


\usepackage[pagebackref=true,breaklinks=true,letterpaper=true,colorlinks,bookmarks=false]{hyperref}

\iccvfinalcopy 

\ificcvfinal\pagestyle{empty}\fi
\begin{document}

\title{PS$^2$-Net: A Locally and Globally Aware Network for Point-Based Semantic Segmentation}

\author{Na Zhao ~~~ Tat-Seng Chua ~~~ Gim Hee Lee \\
National University of Singapore\\
{\tt\small \{nazhao, chuats, leegh\}@comp.nus.edu.sg}
}

\maketitle

\begin{abstract}
	In this paper, we present the PS$^2$-Net - a locally and globally aware deep learning framework for semantic segmentation on 3D scene-level point clouds. In order to deeply incorporate local structures and global context to support 3D scene segmentation, our network is built on four repeatedly stacked encoders, where each encoder has two basic components: EdgeConv that captures local structures and NetVLAD that models global context. Different from existing start-of-the-art methods for point-based scene semantic segmentation that either violate or do not achieve permutation invariance, our PS$^2$-Net is designed to be permutation invariant which is an essential property of any deep network used to process unordered point clouds. We further provide theoretical proof to guarantee the permutation invariance property of our network.
	We perform extensive experiments on two large-scale 3D indoor scene datasets and demonstrate that our PS$^2$-Net is able to achieve state-of-the-art performances as  compared to existing approaches.
\end{abstract}

\vspace{-0.1in}
\section{Introduction}
Semantic scene segmentation refers to the process of assigning a class label to each element representation of a scene. The outcome of semantic scene segmentation is extremely useful for many applications in artificial intelligence such as interactions of robots (\eg self-driving cars and autonomous drones) with its environment, and augmented/virtual reality (AR/VR) \etc. Semantic scene segmentation on 2D images, where each pixel is the elementary representation of the scene, is a long-standing problem in computer vision; and many impressive results have been shown with deep learning \cite{badrinarayanan2017segnet, chen2018deeplab, he2017mask, long2015fully, ronneberger2015u} in recent years. In contrast to its 2D image counterpart, semantic scene segmentation on 3D point cloud, where each point is the elementary representation of the scene, has gained attention in the deep learning/ computer vision community only over last few years. This is largely attributed to the non-permutation invariance property of neural networks \cite{qi2017pointnet, zaheer2017deep}, which work well on 2D image pixels that are arranged in a regular orderly structure, but would fail catastrophically when used on 3D point clouds where the points exist in an unorderly and irregular manner. 

\begin{figure}[t]
	\centering
	\includegraphics[scale=0.35]{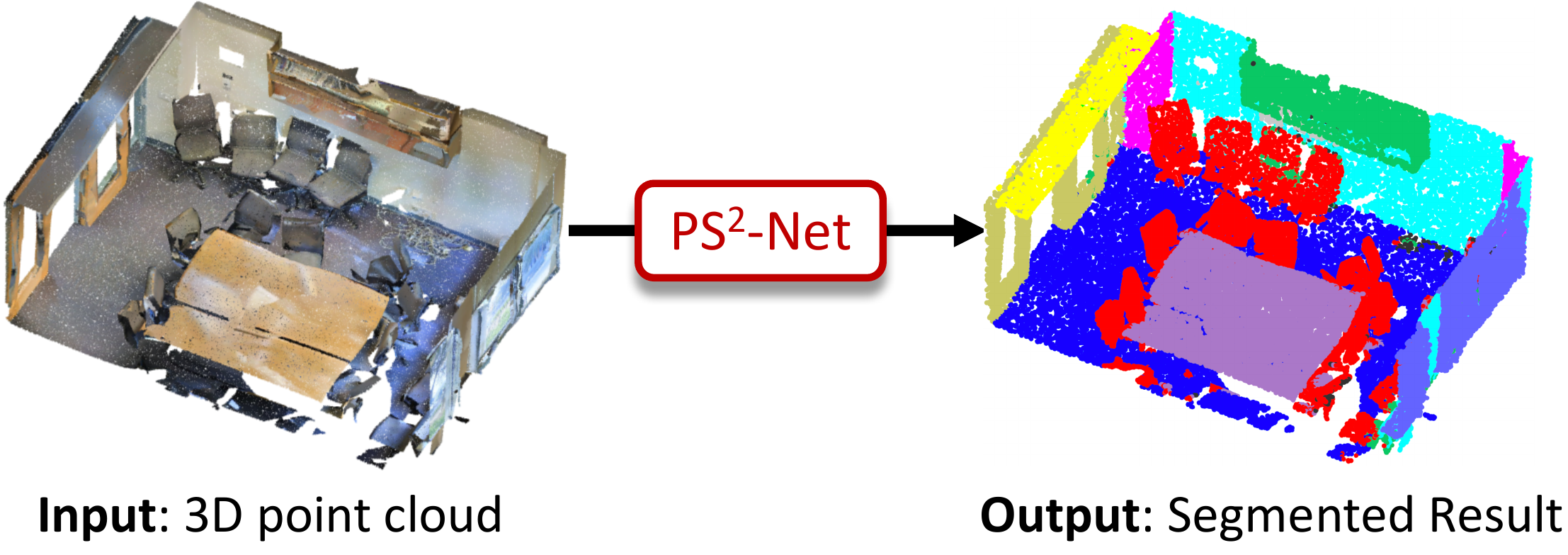}
	\caption{Our PS$^2$-Net segments raw point cloud into semantic homogeneous regions.}
	\label{fig:teaser}
	\vspace{-0.2in}
\end{figure}

To circumvent the permutation invariance problem, several existing works attempt to convert the unordered and irregular 3D point cloud into ordered and regular representations that can be directly used in neural networks. Two commonly used approaches include: (a) \cite{boulch2018snapnet, kalogerakis20173d, lawin2017deep, sinha2016deep, su2015multi} that project the 3D point cloud onto a set of virtual multi-view image planes, and (b) \cite{brock2016generative,dai2017scannet,maturana2015voxnet,tchapmi2017segcloud, wu20153d} that quantize the 3D point cloud into regular voxel grids. However, both approaches result in severe loss of information. The former suffers information loss from occlusions and 3D to 2D projections. The latter suffers from quantization error and the high computational cost of 3D convolutions limits its scalability. Recently, \cite{qi2017pointnet} pioneered the permutation invariant PointNet that allows deep learning to be directly applied to 3D point clouds. Furthermore, it has shown promising results on both 3D object classification and semantic segmentation tasks. Nonetheless, the design of PointNet, \ie feeding each point individually into several multi-layer perceptrons (MLPs) followed by a maxpooling operation to achieve permutation invariance, inherently prohibits the network from capturing the local information embedded in the neighboring points. However, the local information is essential in modeling fine-grained structures, \eg plane or corner, and convex or concave element. Consequently, several later works \cite{engelmann2017exploring, huang2018recurrent, landrieu2017large, li2018pointcnn, qi2017pointnet++, ye20183d, zeng20183dcontextnet} proposed to capture the local information by considering the neighboring points. However,
these approaches treat each point in their respective local region independently to achieve permutation invariance, and this prevents the modeling of geometric relationships among neighbor points, which hinders discriminative local feature learning. Moreover, those RNN-based approaches~\cite{engelmann2017exploring, huang2018recurrent, landrieu2017large, ye20183d} requiring sequential input ordering violate permutation invariance, and \cite{li2018pointcnn} inherently is not permutation-invariant.  DGCNN~\cite{wang2018dynamic} is designed to overcome such limitations. It embeds the relationship between a point and its neighbors in the so-called edge features, and uses a channel-wise symmetric aggregation operation (\ie max-pooling) over local neighborhood to capture representative local information. Nonetheless, the global context among different local features is neglected in their work due to a max pooling operation. This limits its ability to encode the semantic information of the entire 3D scene.

In this paper, we propose the PS$^2$-Net - an efficient end-to-end framework for \textbf{P}oint cloud \textbf{S}emantic \textbf{S}egmentation (PS$^2$) that takes local structures and global context into consideration. Our work leverages on EdgeConv~\cite{wang2018dynamic} to capture local information, and uses the NetVLAD~\cite{arandjelovic2016netvlad}  to encode global context. More specifically, PS$^2$-Net takes raw point cloud as the input and outputs point level semantic class labels. We design an encoder module that can be stacked repeatedly into a deep network for learning the discriminative representations. Each encoder module consists of two basic components - EdgeConv and NetVLAD. Furthermore, we prove theoretically that the PS$^2$-Net guarantees permutation invariance to any order of input points.  Our main contributions are summarized as follows:
\begin{itemize}
	\vspace{-0.1in}
	\item We design PS$^2$-net - an end-to-end network for semantic scene segmentation on 3D point clouds. Our network is permutation-invariant,  and is able to integrate both local structures and global context.
		\vspace{-0.1in}
	\item Our encoder is flexible and can be stacked or recurrently plugged into existing deep learning architectures to exploit the fine-grained local and global properties from point clouds.
		\vspace{-0.1in}
	\item Extensive experimental results on two large-scale 3D indoor datasets show that our  PS$^2$-net outperforms existing state-of-the-art approaches on the task of semantic scene segmentation on 3D point clouds.
\end{itemize}

\begin{figure*}[t]
	\centering
	\includegraphics[scale=0.7]{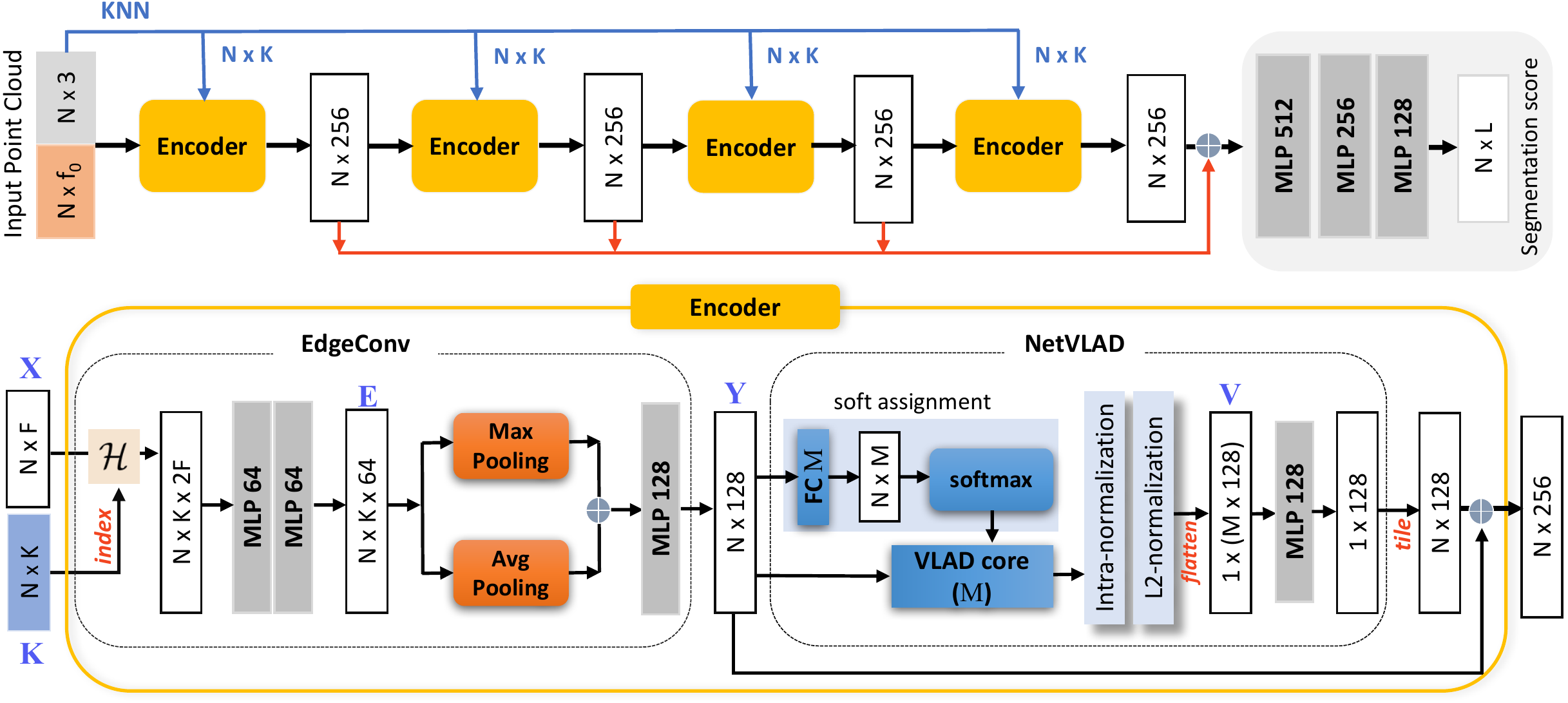}
	\caption{The network architecture of our PS$^2$-Net.}
	\label{fig:OurNetwork}
\vspace{-0.1in}
\end{figure*}

\section{Related work}
In this section, we focus on the literature survey of existing deep learning approaches that directly take point clouds as input for semantic scene segmentation. We omit discussions of existing works that alleviate the permutation invariance problem via conversion of the point clouds into alternative form of representations \eg projections into multi-view virtual images and voxelization.   

PointNet \cite{qi2017pointnet} pioneered the direct use of 3D points as input in deep learning.
It learns point-wise features by passing each point individually through several shared MLPs, followed by a symmetry function (\ie max-pooling) that aggregates over the feature channels into a global feature that represents the point cloud. The operation of the shared MLPs on the individual points and max-pooling made the network permutation invariant.
However, the design of PointNet overlooks the exploitation of local structures and prevents the network from learning fine-grained structures.
To overcome this limitation, most follow-up works attempt to incorporate the local structures in an efficient way. 
PointNet++~\cite{qi2017pointnet++} utilizes the farthest point sampling and ball query to group the point cloud into a set of local regions, and then applies PointNet on each region to capture the local structures. Despite the preservation of permutation invariance and the incorporation of local structures, the grouping operation on the point cloud into independent local regions may impede learning of the global context.

Other variants of PointNet employ various Recurrent Neural Network (RNN) based techniques to learn the contextual information from the local features, which are extracted by PointNet from partitioned local patches based on the spatial locations of the points~\cite{engelmann2017exploring, huang2018recurrent, ye20183d} or geometric homogeneousness~\cite{landrieu2017large}.
Engelmann~\etal~\cite{engelmann2017exploring} divide a point cloud into a tessellation of ordered blocks. A feature generated from each block using PointNet is then fed sequentially into a Gated Recurrent Unit (GRU). The GRU ensures that information from neighboring blocks are encoded in the final feature. 
Huang~\etal~\cite{huang2018recurrent} propose the RSNET, where the point cloud is sliced into three sets of uniformly-spaced blocks independently along the three coordinate axes (\ie x, y, z). A feature is extracted from each block using a PointNet-like network. Next, an ordered sequence for each set of blocks is defined manually along each axis. Each of the three ordered sequences is respectively fed into a RNN layer to explore the local context. Finally, the updated block features are assigned back to the points in the blocks. Thus, each point-wise feature aggregates its three associated directional block features. 
In contrast to an independent use of RNNs on the different axes as in RSNET, Ye~\etal~\cite{ye20183d} propose a two-direction hierarchical RNN model. Specifically, the first set of RNNs is applied on the blocks along the x-axis where the outputs are used by the second set of RNNs that runs along the y-axis to generate the block features across the two dimensions.
However, the RNN-based approaches require sequential ordering in the inputs and this is a violation of the permutation invariance.

More recently, several works~\cite{li2018pointcnn, shen2018mining, wang2018dynamic} are proposed to explore the fine-grained geometric relationships among neighboring points.
PointCNN~\cite{li2018pointcnn} learns a $K \times K$ $\chi$-transformation from the K neighborhoods of the input points. It assumes that each point assigned with a local coordinate can be permuted into a latent and potentially canonical order with the $\chi$-transformation. The learned $\chi$-transformation is further packed with typical convolutions to form a new process to extract features from local regions. However, the $\chi$-transformation is not permutation invariant.
KCNet~\cite{shen2018mining} presents a kernel correlation layer to exploit local geometric structures, where point-set kernel correlation is used to measure similarities between local neighborhood graphs and learned kernel graphs. Despite showing promising performance on object-level related tasks, it is somewhat sensitive to underlying graph structures and its capability in scene-level semantic segmentation is unknown.
DGCNN\cite{wang2018dynamic} explicitly designs the EdgeConv module to better capture local geometric features of point clouds while preserving permutation invariance. In particular, they dynamically compute a $k$-nearest neighbors (KNN) graph for each point in the feature spaces produced by different EdgeConv layers. EdgeConv applies multiple MLP layers over the combination of point-wise features and their respective KNN features to generate edge features, and outputs the enhanced point-wise features by pooling among neighboring edge features. 
However, the features learned from DGCNN does not encode contextual and semantic information of the whole 3D scene, i.e. global context.   
Furthermore, the dynamic re-computation of the KNN graphs during network training is extremely computationally expensive and inefficient when applied on large-scale point clouds. 

Our PS$^2$-Net leverages on EdgeConv to learn the local information, but uses static KNN graphs defined in the spatial coordinate space to reduce computational complexity. Additionally, we use the permutation invariant NetVLAD layer \cite{arandjelovic2016netvlad} to incorporate the global contextual information into our learned feature.

\section{Our PS$^2$-Net}

The network architecture of our PS$^2$-Net is presented in Figure~\ref{fig:OurNetwork}. 
Our network consists of four encoders that are repeatedly stacked together, and a final segment of shared 3-layer MLPs. The encoders output different levels of feature abstractions that are subsequently merged and passed to the shared MLPs for point level semantic label classification.  
The input to our network is an $N\times(3+f_0)$ matrix of $N$ points $\in \mathbb{R}^3$ and an additional feature $\in \mathbb{R}^{f_0}$ (\eg color or surface normal). The network output is an $N \times L$ matrix of predicted probabilities for the $L$ labels on each of the $N$ points. 
A KNN search is done on each of the $N$ points in the $\mathbb{R}^3$ space to produce an $N \times K$ matrix. This $N \times K$ matrix is fed into each of the encoder to group local regions in the corresponding feature spaces.

An encoder is made up of two components - EdgeConv and NetVLAD. The $N \times 128$ output of EdgeConv is fed into NetVLAD, and then concatenated with the $N \times 128$ output of NetVLAD to generate the final $N \times 256$ output of the encoder.
Each encoder first forms an $N\times K \times 2F$ tensor with the operator $\mathcal{H}$ (see next paragraph for more details) using the 
set of input features $\in \mathbb{R}^F$ that are the $K$-nearest neighbors $\in \mathbb{R}^3$.
This $N\times K \times 2F$ tensor is fed into several MLP layers to extract the $N \times K \times 64$ edge features that describe the relationships between each point and its neighbors. The feature dimension $F$ equals to $(3 + f_0)$ and $256$ for the first and subsequent encoders, respectively.
We then apply two symmetric operations, i.e. channel-wise max-pooling and average-pooling, to respectively transform the $K$ edge features into two locally aggregated representations. 
Subsequently, the two representations are concatenated and fed into another MLP layer to produce a set of point-wise features ($N \times 128$) as the output of EdgeConv.

More formally, let us denote the input feature matrix to EdgeConv as $\textbf{X}=\{\textbf{x}_1,...,\textbf{x}_N \mid \textbf{x}_i \in \mathbb{R}^{F}\}$, the indices of KNN as  $\textbf{K}=\{\textbf{k}_1,...,\textbf{k}_N \mid \textbf{k}_i \in \mathbb{R}^{K}\}$, the edge feature tensor as $\textbf{E}=\{\textbf{e}_1,...,\textbf{e}_N \mid \textbf{e}_i \in \mathbb{R}^{K \times 64}\}$, and output feature matrix of the EdgeConv as $\textbf{Y}=\{\textbf{y}_1,...,\textbf{y}_N \mid \textbf{y}_i \in \mathbb{R}^{128}\}$. Then, the $\mathcal{H}$ operator is given by:
\begin{equation}
	\mathcal{H}: \textbf{X} \longmapsto \textbf{H},
	\vspace{-0.1in}
\end{equation}
\noindent where 
\begin{subequations}
		\vspace{-0.1in}
	\begin{align}
		\textbf{H} &= \{\textbf{h}_1,...,\textbf{h}_N \mid \textbf{h}_i \in \mathbb{R}^{K \times 2F} \}, \\
		\textbf{h}_i(j) &= [\textbf{x}_i, \textbf{x}_j - \textbf{x}_i ],~~~~j \in \textbf{k}_i.
	\end{align} \label{eq:HOperator}
		\vspace{-0.15in}
\end{subequations}

\noindent $\textbf{h}_i(j)$ is the $j^{th}$ row of $\textbf{h}_i$ and $[\textbf{A},\textbf{B}]$ represents the concatenation of matrices $\textbf{A}$ and $\textbf{B}$. The edge features $\textbf{E}$ are obtained by: 
\begin{equation}
	\textbf{e}_i = \sigma(\mathcal{F}( \sigma(\mathcal{F}(\textbf{h}_i; \Theta_1)); \Theta_2 ) ),
	\label{eq:edgeFeatures}
		\vspace{-0.1in}
\end{equation}

\noindent where $\mathcal{F}$ represents a shared MLP layer, and $\sigma$ represents a non-linear ReLU activation function. $\Theta_1$ and $\Theta_2$ are two sets of learnable parameters in the two respective shared MLP layers. Finally, the output of edgeConv $\textbf{Y}$ is given by:
\begin{equation}\label{eq:EdgeConvOutput}
	\textbf{y}_i = \sigma(\mathcal{F}([\maxp_{k \in \textbf{k}_i}(\textbf{e}_{k}) , \avgp_{k \in \textbf{k}_i}(\textbf{e}_{k})];~\Theta_3)),
\end{equation}
where 
$\Theta_3$ is the set of learnable parameters for the EdgeConv output shared MLP.

By taking $\textbf{Y}$ as input, NetVLAD outputs an $1 \times (M \times 128)$ global descriptor, denoted as $\textbf{V}$. The $m^{th}$ 128-vector is given by:
\begin{equation}\label{eq:netvlad}
	\textbf{v}_m = \sum_{i=1}^N \frac{e^{\textbf{w}_m^T \textbf{y}_i + b_m}}{\sum_{{m'}=1}^M e^{\textbf{w}_{m'}^T \textbf{y}_i + b_{m'}}} (\textbf{y}_i - \textbf{c}_m).
\end{equation}
There are two sets of parameters in the NetVLAD module: $M$ cluster centers (``visual words'') $\{\textbf{c}_1, ..., \textbf{c}_M \mid \textbf{c}_m \in \mathbb{R}^{128}\}$ and $\{\textbf{w}_m, b_m\}$ for learning the soft assignments, which determines the propagation of information from the input point-wise feature vectors to cluster centroids. 
Specifically, NetVLAD uses a MLP with $M$ feature channels followed by a soft-max function to obtain the soft-assignments. Next, the residuals between input point-wise descriptors and $M$ cluster centroids are aggregated with the soft-assignments. 
Finally, the matrix is flatten into an $1 \times (M \times 128)$ output vector $\textbf{V}$ with intra- and inter- normalization. 
To be computational efficient, this high dimensional vector is further compressed into a concise feature representation via a MLP, which is treated as the final global descriptor.

\subsection{Exploitation of Local Structure}
As mentioned earlier, we take inspiration from EdgeConv in \cite{wang2018dynamic} to exploit the local structures among the local neighborhood of a point. We replace the original dynamic KNN graphs computation with static KNN graphs computed on metric space for two reasons. First, we assume that the static metric-based KNN graphs may supervise local structure learning with spatial constraint. Second, static KNN graphs is analogous to the structure of images, where the neighborhoods of pixels remain fixed during convolution.    
Additionally, the two symmetric operations in aggregating edge features, i.e. channel-wise max-pooling and average-pooling, are designed to compensate the information loss during aggregation. Max-pooling operator individually performs over $K$ dimensions and selects the maximum feature responses over the neighborhood of each point, while average pooling operator sums up all the features in the same local region. By combining these two operations, fine-grained local information are preserved and the strong responses are emphasized.   
We progressively expand the receptive field by repeatedly stacking the encoders.

\subsection{Aggregation of Global Context}
In addition to exploiting the local structures, we also design our network to be aware of the global context that provides scene-level semantic information. This information is potentially useful and can alleviate local confusions. Particularly, semantic context helps to distinguish patches with similar appearances/geometrics but different semantic meanings (\eg differentiate a white wall from a white board on it). To incorporate this globally contextual information, we leverage on the success of NetVLAD \cite{arandjelovic2016netvlad}, which is a technique originally designed for aggregating local descriptors into a global vector in the image domain. \cite{uy2018pointnetvlad} adopted it into
PointNetVLAD that generates global descriptors for point-based inputs. The descriptor vector of each cluster center is a summation of residuals (\ie contributions) of each input feature to the center. Consequently, it is able to reveal fine-grained global contexts due to the large receptive field and aggregation of the learned relationships with all points.

\subsection{Permutation Invariance}
\noindent We now prove that the PS$^2$-Net is permutation-invariant. 
\begin{lemma}\label{lemma1}
	The PS$^2$-Net is permutation invariant, \ie if the rows of the input point cloud matrix are permuted, the output of the network remains unchanged.
\end{lemma}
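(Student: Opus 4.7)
The plan is to exploit the compositional structure of PS$^2$-Net and argue, layer by layer, that every row-permutation of the input is absorbed into the natural row-permutation of every intermediate feature matrix. In semantic segmentation each of the $N$ input points receives its own label, so the statement that ``the output is unchanged'' under a permutation $\pi$ should be read as: the point-to-label assignment is independent of the input ordering, \ie the $N\times L$ output matrix is itself permuted by $\pi$. Writing the network as a composition of permutation-equivariant and permutation-invariant maps therefore suffices.

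First I would fix notation: let $\pi$ be a permutation of $\{1,\dots,N\}$ and $P_\pi$ its row-permutation matrix; call a layer $f$ \emph{equivariant} if $f(P_\pi \textbf{X})=P_\pi f(\textbf{X})$ and \emph{invariant} if $f(P_\pi \textbf{X})=f(\textbf{X})$. The target then reduces to showing that the EdgeConv branch (together with the final shared MLPs) is equivariant, that the NetVLAD branch is invariant, and that equivariance is preserved under composition and under concatenation of an equivariant feature with a broadcast invariant one.

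The only subtle step is the KNN construction. Since KNN is defined purely by Euclidean distance in $\mathbb{R}^3$ and distances are invariant under relabelling of points, the neighbour \emph{set} attached to each point is unchanged by $\pi$; only the integer labels of the neighbours are renamed consistently, so after permutation the $i$-th KNN row becomes $\pi^{-1}(\textbf{k}_{\pi(i)})$. The operator $\mathcal{H}$ in Eq.~\eqref{eq:HOperator} and the shared MLPs in Eq.~\eqref{eq:edgeFeatures} act point-wise on the centre and on each neighbour, while the max- and average-pooling aggregators in Eq.~\eqref{eq:EdgeConvOutput} are symmetric functions of their $K$ arguments. Hence the relabelling inside the KNN indices is absorbed by the symmetric aggregation, and EdgeConv maps $P_\pi \textbf{X}$ to $P_\pi \textbf{Y}$, i.e.\ is equivariant.

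For NetVLAD, invariance is immediate from Eq.~\eqref{eq:netvlad}: each $\textbf{v}_m$ is a sum over $i=1,\dots,N$ of a point-wise function of $\textbf{y}_i$, and summation is commutative, so $\textbf{V}$ and its subsequent MLP compression and normalisations depend only on the multiset $\{\textbf{y}_1,\dots,\textbf{y}_N\}$. Broadcasting this invariant global descriptor along the $N$ rows and concatenating with the equivariant EdgeConv output preserves equivariance, so each encoder is equivariant; stacking four encoders, concatenating their outputs across levels, and applying the final point-wise three-layer MLP are then further compositions of equivariant maps, which yields the lemma. The main obstacle in a careful write-up will be the bookkeeping around the KNN reindexing: one has to verify that the multiset of rows $\{\textbf{h}_i(j) : j\in\textbf{k}_i\}$ fed into the pooling in Eq.~\eqref{eq:EdgeConvOutput} is the same before and after permutation, up to the natural action of $\pi$ on the centre index $i$, so that symmetric pooling really does deliver the permuted-but-otherwise-identical output. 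Everything else is routine manipulation of commutative sums and point-wise maps.
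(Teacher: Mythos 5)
Your proof is correct and follows the same underlying strategy as the paper's: trace the permutation through each layer, observe that point-wise shared MLPs respect the reindexing, that the max/average pooling over neighbours is symmetric, and that the NetVLAD sum over $i=1,\dots,N$ is commutative and hence order-independent. The differences are in rigour rather than in route. The paper argues only for a single transposition of two points (sufficient in principle, since transpositions generate the symmetric group, though this is left implicit) and asserts that ``the KNN indices are still given by $\textbf{K}$,'' which quietly conflates the neighbour \emph{sets} (unchanged as sets of physical points) with the integer index arrays (which are renamed by $\pi$). You handle both points more carefully: you work with an arbitrary permutation, you make the reindexing $\pi^{-1}(\textbf{k}_{\pi(i)})$ explicit and note that it is absorbed by the symmetric aggregators in Eq.~\eqref{eq:EdgeConvOutput}, and you correctly separate the claim into equivariance of the per-point branch (the $N\times L$ output is permuted along with the input, which is the meaningful notion of ``unchanged'' for a segmentation network) versus genuine invariance of the NetVLAD descriptor. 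The paper's statement of the lemma (``the output remains unchanged'') is, read literally, false for the row-indexed output matrix, and your equivariance/invariance decomposition is the cleaner way to state and prove what is actually intended; what it buys is a proof that composes cleanly across the four stacked encoders and the final MLPs without re-deriving anything per layer.
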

\begin{proof}
	Let $\textbf{P}=\{\textbf{p}_1,\textbf{p}_2,..,\textbf{p}_N\}$ denote the input $N \times 3$ matrix, and $\textbf{L}$ denote the output $N \times L$ matrix. Now we want to prove if the input is $\pi\textbf{P}$ where $\pi$ is an $N \times N$ permutation matrix, while the output of PS$^2$-Net remains as $\textbf{L}$.
	As the shared MLPs operating on individual points are obviously permutation-invariant. Here we simplify the proof by proving the permutation invariance property of the encoder in PS$^2$-Net. 
	Suppose we have a permuted point cloud 
	\begin{equation*}
		\Tilde{\textbf{P}}=\{\textbf{p}_1,...,\textbf{p}_{i-1}, \textbf{p}_{j},\textbf{p}_{i+1},...,\textbf{p}_{j-1},\textbf{p}_{i},\textbf{p}_{j+1},...,\textbf{p}_N\}
	\end{equation*}
	that only reorders points $\textbf{p}_{i}$ and $\textbf{p}_{j}$ in $\textbf{P}$, then the feature representation of
	$\Tilde{\textbf{P}}$ is given by 
	\begin{equation*}
		\Tilde{\textbf{X}}=\{\textbf{x}_1,...,\textbf{x}_{i-1}, \textbf{x}_{j},\textbf{x}_{i+1},...,\textbf{x}_{j-1},\textbf{x}_{i},\textbf{x}_{j+1},...,\textbf{x}_N\}.
	\end{equation*}
	As the reordering does not affect the order of nearest neighbors, the KNN indices are still given by \textbf{K}. Inputting $\Tilde{\textbf{X}}$ and \textbf{K} into the encoder, the output is formulated as:
	\begin{equation}\label{eq:NetVLAD}
		\Tilde{\textbf{V}} = f^{\nu}(f^{e}(\Tilde{\textbf{X}},\textbf{K})),
	\end{equation}
	where $f^{e}(\cdot)$ denotes a series of operations in EdgeConv $\Tilde{\textbf{Y}} = f^e(\Tilde{\textbf{X}},\textbf{K})$
	and $f^{\nu}(\cdot)$ denotes the NetVLAD function $\Tilde{\textbf{V}} = f^{\nu}(\Tilde{\textbf{Y}})$.
	
	Putting the $\Tilde{\textbf{X}}$ into the operator $\mathcal{H}$ in Equation \ref{eq:HOperator}, we get
	\begin{equation*}
		\Tilde{\textbf{H}}=\{\textbf{h}_1,...,\textbf{h}_{i-1},\textbf{h}_{j},\textbf{h}_{i+1},...,\textbf{h}_{j-1},\textbf{h}_{i},\textbf{h}_{j+1},...,\textbf{h}_N\}.    
	\end{equation*}
	
	\noindent Since each $\textbf{e}_i$ is processed independently in the shared MLPs (see Equation \ref{eq:edgeFeatures}), it is obvious that
	\begin{equation*}
		\Tilde{\textbf{E}}=\{\textbf{e}_1,...,\textbf{e}_{i-1},\textbf{e}_{j},\textbf{e}_{i+1},...,\textbf{e}_{j-1},\textbf{e}_{i},\textbf{e}_{j+1},...,\textbf{e}_N\}. 
	\end{equation*}
	
	\noindent Finally, the output of edgeConv is given by
	\begin{equation*}
		\Tilde{\textbf{Y}}=\{\textbf{y}_1,...,\textbf{y}_{i-1},\textbf{y}_{j},\textbf{y}_{i+1},...,\textbf{y}_{j-1},\textbf{y}_{i},\textbf{y}_{j+1},...,\textbf{y}_N\},
	\end{equation*}
	
	\noindent again this is because each $\textbf{e}_i$ is processed independently in the shared edgeConv output MLP in Equation \ref{eq:EdgeConvOutput}.

	The output of NetVLAD $\textbf{V}=\{\textbf{v}_1,\textbf{v}_2,...,\textbf{v}_M\}$ of the original input features $\textbf{X}$ before permutation can now be written as
	\begin{equation}
		\begin{split}
			\textbf{v}_m &= f^\nu_m(\textbf{y}_1) + f^\nu_m(\textbf{y}_2) +... +f^\nu_m(\textbf{y}_N) \\
			&= \sum_{s=1}^N f^\nu_m(\textbf{y}_s),~~~~~\forall m,
		\end{split}
	\end{equation}
	where 
	\begin{equation}
		f^\nu_m(\textbf{y}_s) = \frac{e^{\textbf{w}_m^T \textbf{y}_s + b_m}}{\sum_{{m'}=1}^M e^{\textbf{w}_{m'}^T \textbf{y}_s + b_{m'}}} (\textbf{y}_s - \textbf{c}_m);
	\end{equation}
	and the output of NetVLAD with the permuted input features is given by
	\begin{equation}
		\begin{split}
			\Tilde{\textbf{v}}_m &= f^\nu_m(\textbf{y}_1) + ... + f^\nu_m(\textbf{y}_{i-1}) + f^\nu_m(\textbf{y}_{j}) + \\ &~~~~~f^\nu_m(\textbf{y}_{i+1}) + ... + f^\nu_m(\textbf{y}_{j-1}) + f^\nu_m(\textbf{y}_{i}) + \\ &~~~~~f^\nu_m(\textbf{y}_{j+1}) + ... + f^\nu_m(\textbf{y}_N) \\
			&= \sum_{s=1}^N f^\nu_m(\textbf{y}_s) = \textbf{v}_m,~~~~~\forall m. 
		\end{split}
	\end{equation}
	
	\noindent Hence, 
	\begin{equation}
		\textbf{V} = f^{\nu}(f^{e}(\Tilde{\textbf{X}},\textbf{K})).
	\end{equation}
	This completes our proof that the encoder of PS$^2$-Net is permutation invariant.
	
\end{proof}

\section{Experiments}
\subsection{Datasets}
We conduct experiments of our network on two challenging benchmark datasets
: S3DIS~\cite{armeni20163d} and ScanNet~\cite{dai2017scannet}. The details of these two datasets are as follows.

\vspace{-0.15in}
\paragraph{S3DIS} This dataset consists of 6 different indoor areas from 3 different buildings. It has 271 rooms with various styles, \eg conference room, lobby, restroom. 
Each point is labeled by one of 13 semantic classes. These classes are partitioned into \textit{structural type} (ceiling, floor, wall, beam, column, window, door), \textit{furniture type} (table, chair, sofa, bookcase, board) and \textit{clutter}. In the experiments, we adopt the 6-fold training/testing split used in~\cite{qi2017pointnet}.

\vspace{-0.15in}
\paragraph{ScanNet} This dataset consists of 1,513 scans from 707 unique indoor scenes. The space type is very diverse, ranging from very small (\eg bathroom, closet, utility room) to very large (\eg apartment, classroom, library) spaces. 
Each point is annotated by one out of 21 semantic classes, including 20 object classes plus 1 extra class representing free space.
Following the experimental settings in~\cite{qi2017pointnet++}, we adopt 1,201 scans for training and 312 scans for testing.

\subsection{Implementation Details}
Our PS$^2$-Net consists of four repeatedly stacked encoders with the same configuration. Each EdgeConv module has two shared MLPs (64,64) to extract edge features and one shared MLP (128) to fuse the max- and avg- pooled edge features. 
The NetVLAD module has 16 clusters and produces a $(16 \times 128)$-dimensional global descriptor that is fed into a shared MLP (128) for dimension reduction. 
Skip link is is added from the output of EdgeConv to the output of NetVLAD for integrating local and global features. 
The outputs from all the four encoders are concatenated and forwarded to three shared MLPs (512,256,128) to map the learned features to point labels. Dropout with a drop-ratio of 0.3 is used in the first layer.
Batch normalization and ReLU are added to all the respective MLPs.
The number of nearest neighbors is set to 20.

\begin{table*}[t]
	\centering
	\caption{Comparison of performances on S3DIS dataset using 6-fold cross validation. Results in the upper and lower table are obtained with data preparation setups in~\cite{qi2017pointnet} (denoted as \textit{P1}) and~\cite{li2018pointcnn} (denoted as \textit{P2}), respectively. Class-wise IoU is also given.}
	\scalebox{0.76}{
		\begin{tabular}{l||l l||l l l l l l l l l l l l l}
			\hline\toprule[1pt]
			\textbf{Method}  & \textbf{OA}  & \textbf{mIoU} & \textbf{ceiling} & \textbf{floor} &\textbf{wall} & \textbf{beam}& \textbf{column}& \textbf{window}& \textbf{door}& \textbf{table}& \textbf{chair}& \textbf{sofa}  &  \textbf{bookcase}&\textbf{board} & \textbf{clutter} \\\hline
			PointNet~\cite{qi2017pointnet} & 78.5 & 47.6 & 88  & 88.7  & 69.3 & 42.4  & 23.1  & 47.5 & 51.6  & 54.1 & 42  & 9.6 & 38.2  &29.4 & 35.2 \\\hline
			G+RCU~\cite{engelmann2017exploring} & 81.1 & 49.7 & 90.3  & 92.1 & 67.9 & \textbf{44.7} & 24.2 & 52.3 & 51.2 & 58.1 & 47.4 & 6.9 & 39 & 30 & 41.9 \\\hline
			DGCNN~\cite{wang2018dynamic} & 84.1 & 56.1 & --- & --- & --- & --- & --- & --- & --- & --- & --- & --- & --- & --- & --- \\\hline
			RNNCF~\cite{ye20183d} & \textbf{86.9} & 56.3 & 92.9 & 93.8 & 73.1 & 42.5 & 25.9 & 47.6 & 59.2 & 60.4 & \textbf{66.7} & 24.8 & \textbf{57} & 36.7 & 51.6 \\\hline
			RSNet~\cite{huang2018recurrent}  & --- & 56.47 & 92.48 & 92.83 & 78.56 & 32.75 & 34.37 & 51.62 & \textbf{68.11} & 60.13 & 59.72 & \textbf{50.22} & 16.42 & 44.85 & 52.03 \\\hline
			\textbf{PS$^2$-Net---P1} & 86.69 & \textbf{61.56} & \textbf{93.40} & \textbf{95.64} & \textbf{79.94} & 37.17 & \textbf{40.93} & \textbf{59.83} & 66.65 & \textbf{63.65} & 65.71 & 37.16 & 49.83 & \textbf{54.56} & \textbf{55.66} \\\hline \hline
			PointCNN~\cite{li2018pointcnn}  & 88.14 & 65.39 & \textbf{94.78} & \textbf{97.3} & 75.82 & \textbf{63.25} & 51.71 & 58.38 & 57.18 & \textbf{71.63} & 69.12 & 39.08 & \textbf{61.15} & 52.19 & 58.59  \\\hline
			\textbf{PS$^2$-Net---P2} & \textbf{88.22} & \textbf{66.60} & 93.04 & 96.26  & \textbf{83.22} & 41.61 & \textbf{54.05} & \textbf{60.08} & \textbf{70.40} & 67.37 & \textbf{73.13} & \textbf{48.75} & 58.73 & \textbf{58.68} & \textbf{60.48} \\\hline\toprule[1pt]
	\end{tabular}}
	\label{tbl:s3dis-6fold}
	\vspace{-0.1in}
\end{table*}

Our framework is implemented using PyTorch deep learning library on a NVIDIA GTX 1080Ti.
We optimize the network using ADAM~\cite{kingma2014adam} with an initial learning rate of 0.001 and weight decay 1e-5. Multi-class cross-entropy is used as the loss function. In most experiments, the learning rate is decayed by half after every 100 epochs. In general, the networks converged at $\sim$150 epochs. 
The batch size is set to 6 for experiments with 4,096 points as input, otherwise, the batch size is updated with respect to the number of input points. Note that we do not use any data augmentation in our experiments.

We adopt the two widely used metrics: overall accuracy (OA) and mean interaction over union (mIoU)
to evaluate the segmentation performance of our network. Additionally, we also report the class-wise IoU, which is computed for each point that belongs to its corresponding semantic class.

\subsection{Segmentation on S3DIS Dataset}
\subsubsection{Data preparation}
We observed some differences in the data preparation setups among the existing approaches for S3DIS dataset. 
One widely-adopted setup is proposed by PointNet~\cite{qi2017pointnet}, which splits each room into non-overlapping blocks 
of 1m$\times$1m area on the $xy$ plane and each point is represented by a 9-dim vector containing the $(x,y,z)$ coordinates, $(r,g,b)$ color and normalized coordinates. 
4,096 points are randomly sampled from each block during training and testing.
The other setup is presented by PointCNN~\cite{li2018pointcnn}. It slices the rooms into 1.5m-by-1.5m blocks with 0.3m padding on each side and each point is associated with a 6-dim vector containing the $(x,y,z)$ coordinates and $(r,g,b)$ color. $\mathcal{N}(2048, 256^2)$ ($\mathcal{N}$ denotes the Gaussian distribution) points are sampled from each block during training, while each block is sampled multiple times to make sure that all the points are evaluated during testing. 
In order to make fair comparisons, we conduct experiments using both data preparation setups\footnote{We use the codes in \hyperlink{https://github.com/charlesq34/pointnet/tree/master/sem_seg}{PointNet} and \hyperlink{https://github.com/yangyanli/PointCNN/tree/master/data_conversions}{PointCNN} for preprocessing.} and report the comparisons accordingly.

\begin{figure}[h]
	\centering
	\includegraphics[scale=0.55]{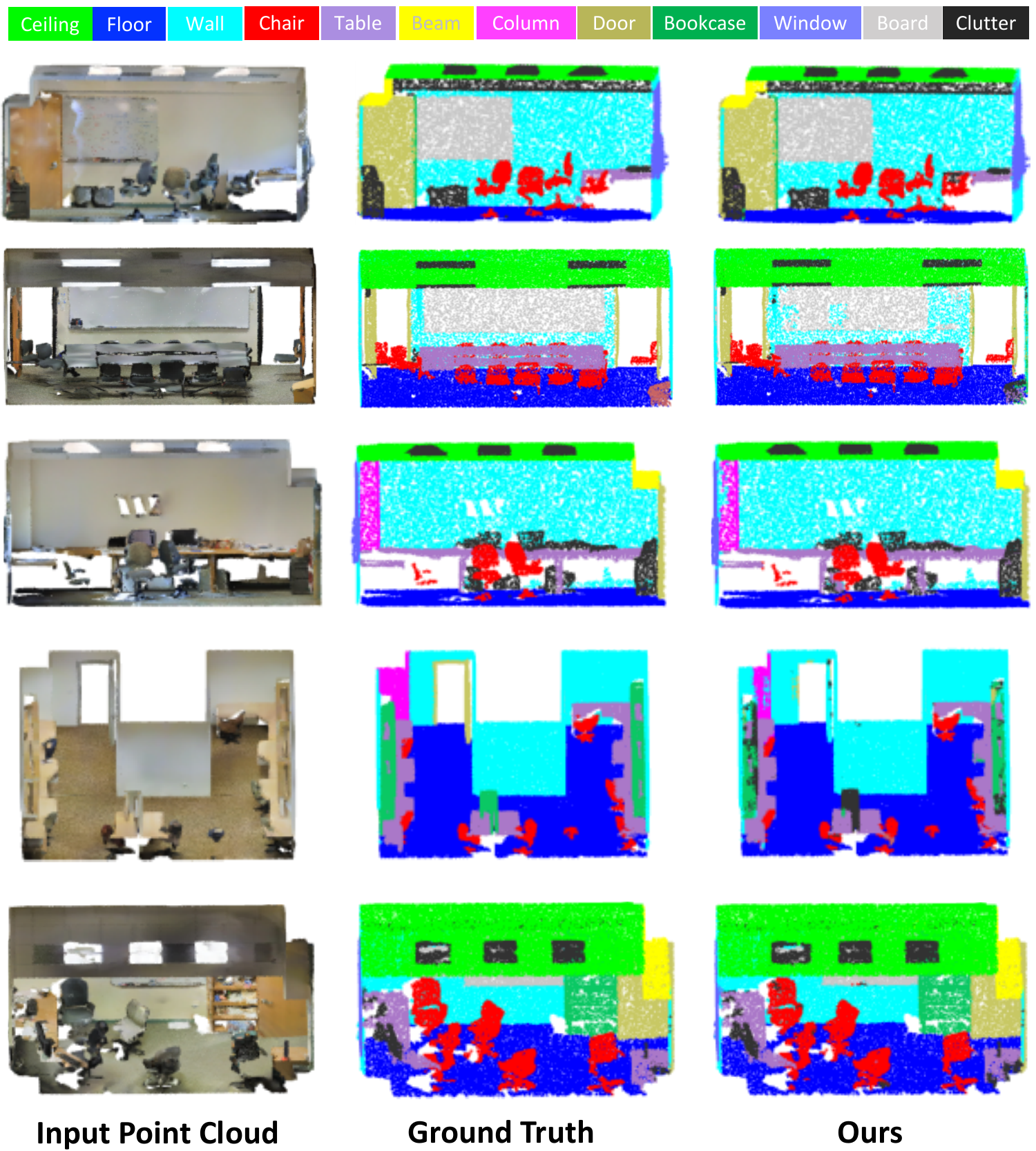} 
	\caption{PS$^2$-Net semantic segmentation results on S3DIS.}
	\label{fig:s3dis-visualization}
	\vspace{-0.1in}
\end{figure}

\begin{table*}[t]
	\centering
	\caption{Comparison of performances on ScanNet dataset using XYZ information as input. Results in the upper and lower table are obtained with data preparation setups in~\cite{qi2017pointnet++} (denoted as \textit{P3}) and~\cite{li2018pointcnn} (denoted as \textit{P2}), respectively. Class-wise IoU is also given.}
	\scalebox{0.78}{
		\begin{tabular}{p{2.5cm}||p{1cm} p{1cm}||p{1.17cm} p{1.17cm} p{1cm} p{1cm} p{1cm} p{1cm} p{1.6cm} p{1cm} p{1cm} p{1.5cm} }
			\hline\toprule[1pt]
			\textbf{Method}  & \textbf{OA}  & \textbf{mIoU} & \textbf{wall} & \textbf{floor} &\textbf{chair} & \textbf{table}& \textbf{desk}& \textbf{bed}& \textbf{bookshelf}& \textbf{sofa}& \textbf{sink}  &\textbf{bathtub}  \\\hline
			PointNet~\cite{qi2017pointnet} & --- & 14.69 & 69.44 & 88.59 & 35.93 & 32.78 & 2.63 & 17.96 & 3.18 & 32.79 & 0 & 0.17\\\hline
			PointNet++~\cite{qi2017pointnet++}& --- & 34.26 & 77.48 & 92.5 & 64.55 & 46.6 & 12.69 & 51.32 & 52.93 & 52.27 & 30.23 & 42.72\\\hline
			RSNet~\cite{huang2018recurrent} & --- & 39.35 & \textbf{79.23} & \textbf{94.1} & 64.99 & \textbf{51.04} & \textbf{34.53} & \textbf{55.95} & \textbf{53.02} & \textbf{55.41} & \textbf{34.84} & 49.38\\\hline
			\textbf{PS$^2$-Net---P3} & --- & \textbf{40.17}  & 72.44 & 91.51 & \textbf{65.08} & 45.61 & 26.27 & 48.90 & 39.96 & 53.94 & 24.58 & \textbf{64.78} \\\hline\hline
			PointCNN~\cite{li2018pointcnn}& 85.1 & --- & --- & --- & --- & --- & --- & --- & --- & --- & --- & ---\\\hline
			\textbf{PS$^2$-Net---P2} & \textbf{87.21} & 44.90  & 77.02 & 91.22 & 68.36 & 56.66 & 31.62 & 53.55 & 36.32 & 58.75 & 43.07 & 70.11\\\hline\toprule[1pt]
	\end{tabular}}
	\newline
	\vspace*{0.1 cm}
	\newline
	\scalebox{0.78}{
		\begin{tabular}{p{2.5cm}||p{1cm} p{1.2cm} p{1.2cm} p{0.8cm} p{1.2cm} p{2.4cm} p{2cm} p{1cm} p{1.1cm} p{2.5cm}}
			\hline\toprule[1pt]
			\textbf{Method}  & \textbf{toilet} & \textbf{curtain} & \textbf{counter} & \textbf{door} & \textbf{window} & \textbf{shower curtain} & \textbf{refridgerator} & \textbf{picture} & \textbf{cabinet} & \textbf{other furniture} \\\hline
			PointNet~\cite{qi2017pointnet} & 0 & 0 & 5.09 & 0 & 0 & 0 & 0 & 0 & 4.99 & 0.13 \\\hline
			PointNet++~\cite{qi2017pointnet++} & 31.37 & 32.97 & 20.04 & 2.02 & 3.56 & 27.43 & 18.51 & 0  & 23.81 & 2.2\\\hline
			RSNet~\cite{huang2018recurrent} & 54.16 & 6.78 & 22.72 & 3 & 8.75 & 29.92 & \textbf{37.9} & 0.95 & \textbf{31.29} & \textbf{18.98}\\\hline
			\textbf{PS$^2$-Net---P3} & \textbf{60.87} & \textbf{41.20} & \textbf{24.62} & \textbf{8.37} & \textbf{21.55} & \textbf{47.24} & 19.68 & \textbf{2.63} & 28.02 & 16.22 \\\hline\hline
			PointCNN~\cite{li2018pointcnn}& --- & --- & --- & --- & --- & --- & --- & ---  & --- & ---\\\hline
			
			\textbf{PS$^2$-Net---P2} & 66.28 & 41.94 & 23.73 & 10.94 & 17.82 & 51.02 & 44.19 & 3.17 & 32.82 & 19.33   \\\hline\toprule[1pt]
	\end{tabular}}
	\label{tbl:scannet-comparison}
	\vspace{-0.1in}
\end{table*}

\subsubsection{Results and Discussion}
Table~\ref{tbl:s3dis-6fold} illustrates the performance of our PS$^2$-Net in comparison to previous state-of-the-art approaches on the S3DIS dataset. All approaches~\cite{engelmann2017exploring, huang2018recurrent, wang2018dynamic, ye20183d} from the upper table prepared their experimental data according to the data preparation setup in PointNet~\cite{qi2017pointnet}. We can see that our PS$^2$-Net achieves the best performance in the mIoU criteria, which is a more precise criteria than the overall accuracy as the dataset is highly unbalanced. In particular, our PS$^2$-Net improves the mIoU by 9.7\% and overall accuracy by 3\% when compared with DGCNN~\cite{wang2018dynamic}, which is made up of a stacking of EdgeConv layers. We argue that the incorporation of global context in our network contributes to this significant improvement. Additionally, our PS$^2$-Net outperforms the three existing RNN-based methods~\cite{engelmann2017exploring, huang2018recurrent,ye20183d} in the mIoU criteria. 

In the lower part of Table~\ref{tbl:s3dis-6fold}, we show competitive performance of our PS$^2$-Net with PointCNN~\cite{li2018pointcnn} when the same data preprocessing setup is used. Moreover, the performance of our PS$^2$-Net---P2 increases by 8.2\% in mIoU compared to PS$^2$-Net---P1. This indicates that data preprocessing plays an important role in training, and the extensive online sampling used in PointCNN is able to expose the model to larger groups of samples during training.

Furthermore, several qualitative results on the S3DIS dataset are shown in Figure~\ref{fig:s3dis-visualization}. As we can see from the examples, the dataset is very challenging in many scenarios, \eg ``the white boards on the while wall'', ``the open doors with only visible door frames'', and ``the white column in the boundaries of the white wall'' \etc. Interestingly, our PS$^2$-Net successfully segments the boards, doors, columns in most cases despite the similar colors (\eg wall vs column) or geometries (\eg wall vs board). We believe that the correct classifications are consequences of integrating local structures and global context into our network. 

\begin{figure}[h]
	\centering
	\includegraphics[scale=0.5]{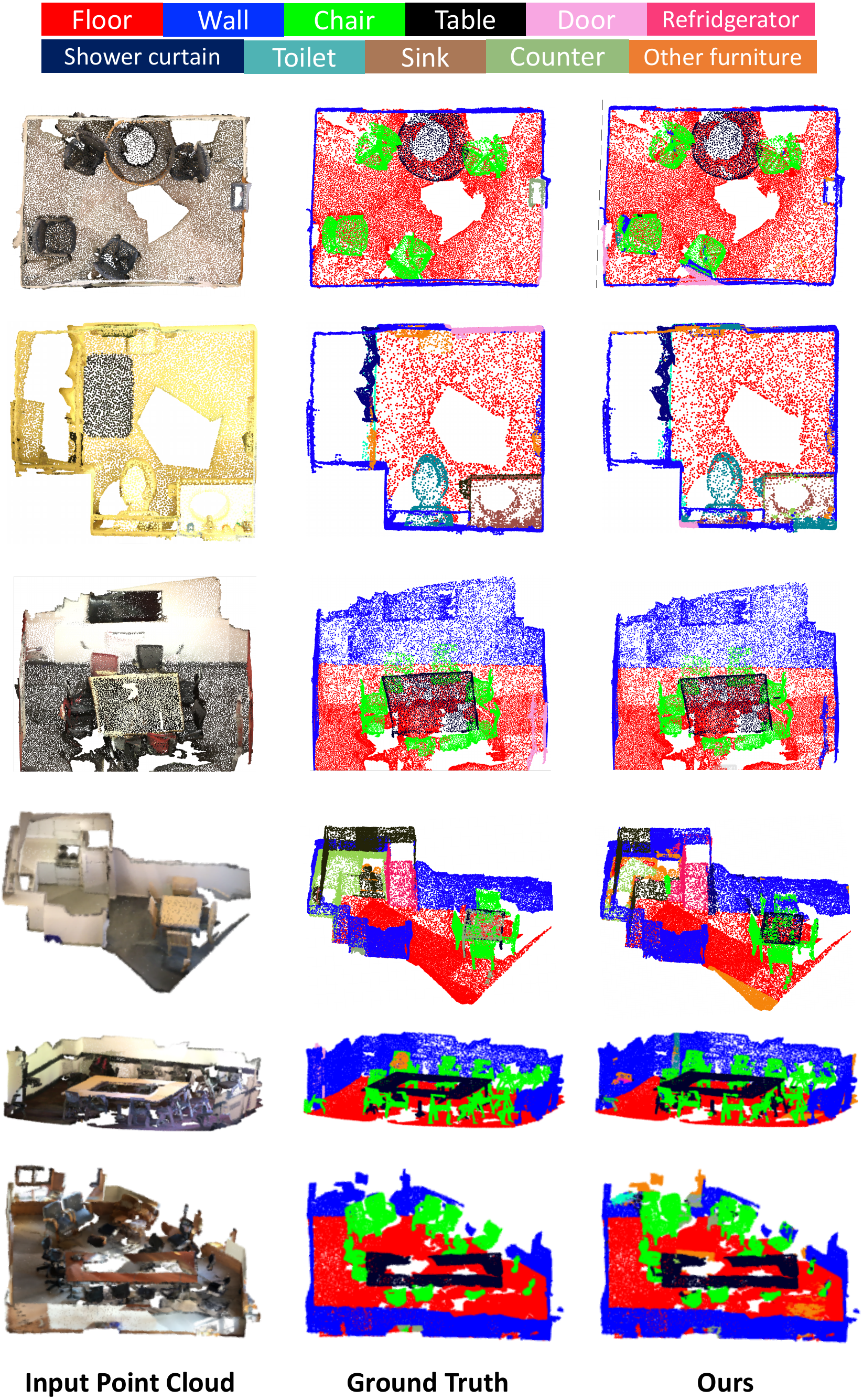}
	\caption{PS$^2$-Net semantic segmentation results on ScanNet.}
	\label{fig:scannet-visualization}
	\vspace{-0.2in}
\end{figure}

\subsection{Segmentation on ScanNet Dataset}
\subsubsection{Data preparation}
Similar to S3DIS, there are two data preparation setups in the ScanNet dataset. The first setup is proposed by PointNet++~\cite{qi2017pointnet++}. It follows the same setup in \cite{dai2017scannet} to first generate a tessellation of 1.5m$\times$1.5m$\times$3m cubes with 2cm$^3$ voxels from the 3D point clouds. Next, cubes 
with $\geq2\%$ occupied voxels and $\geq70\%$ valid annotations on the voxel surfaces are extracted. 

During training, 8,192 points are sampled from each cube. Each point is represented by only the $(x,y,z)$ coordinates. The second setup is proposed by PointCNN~\cite{li2018pointcnn}. It prepares the data in the same way as S3DIS except that only the $(x,y,z)$ coordinates are used on each point. 
Additionally, it converts the the segmentation results on the test data into semantic voxel labeling for comparison with test results from the first data preprocessing setup. 
Again, to make fair comparisons, we conduct experiments using both data processing setups\footnote{We use the code of \hyperlink{https://github.com/charlesq34/pointnet2/tree/master/scannet}{PointNet++} for preprocessing.}.
Note that different from S3DIS, we only use the $(x,y,z)$ information as the input in this dataset in order to be compliant with the previous approaches.

\subsubsection{Results and Discussion}
The comparison of performances on the ScanNet dataset is summarized in Table~\ref{tbl:scannet-comparison}. 
It can be seen that the performance of our PS$^2$-Net outperforms the previous state-of-the-art methods~\cite{huang2018recurrent, qi2017pointnet++} in the mIoU criteria when the prepocessing setup in PointNet++~\cite{qi2017pointnet++} is used.
Notably, we achieve remarkable improvements in the classification results of several challenging classes with very few training data, \ie bathtub (0.3\%), toilet (0.3\%), curtain (1.5\%), window (0.9\%) and shower curtain (0.2\%). We reckon that the increase in performance comes from the use of EdgeConv, where the discriminative representations learned from local structures are superior to the point-wise features that are individually processed in \cite{huang2018recurrent, qi2017pointnet, qi2017pointnet++}.
We obtain an impressive improvement in mIoU (11.8\%) on PS$^2$-Net---P2 compared to PS$^2$-Net---P3; and furthermore, we surpass PointCNN~\cite{li2018pointcnn}. 
Particularly, our PS$^2$-Net achieves more accurate segmentation results (class-wise IoU $>$0.4) in 10 out of 20 classes.

Several segmentation results are visualized in Figure~\ref{fig:scannet-visualization}. Our PS$^2$-Net is able to recognize both frequently, \eg wall, floor, chair, and rarely, \eg toilet, sink, refridgerator, seen objects. 

\subsection{Ablation studies}
In this section, we investigate the contributions of the respective components (\ie EdgeConv and NetVLAD) in our PS$^2$-Net, and evaluate the effects of several key hyper-parameters (\ie number of  nearest neighbors $K$ in EdgeConv, clusters $M$ in NetVLAD, and stacked encoders). We conduct ablation experiments on the fifth fold of the S3DIS dataset, \ie we test on Area 5 and train on the remaining data. More specifically, the testing area is collected in a separated building from all the training areas.
All settings remain unchanged as the baseline PS$^2$-Net in the ablation experiments except the target hyper-parameter.

\begin{table}[t]
	\centering
	\caption{Ablation test of PS$^2$-Net variants on S3DIS [A5].}
	\begin{tabular}{p{3cm}|p{1.5cm}|p{1.5cm}}
		\hline\toprule[0.5pt]
		Model & OA & mIoU \\ \hline
		w/o local features & 81.66 & 45.25 \\\hline
		w/o EdgeConv & 83.50 & 50.35\\\hline
		w/o NetVLAD & 84.00 & 50.18 \\\hline
		Our full PS$^2$-Net & \textbf{84.60} & \textbf{52.95}\\\hline\toprule[0.5pt]
	\end{tabular}
	\label{tab:ablation-modules}
\end{table}

\paragraph{Effectiveness of network modules}  We study the effect of each network module 
by removing them individually from the network, and compare the performances before and after the removal. To further measure the contribution of local features, we design two settings: in (1) ``w/o local features'', we substitute the EdgeConv module with a point-wise feature learning module which does not exploit local structures, and in (2) ``w/o EdgeConv'', we apply point-wise feature learning followed by a max-pooling operator over K-nearest neighbors to extract the local features. Additionally, in ``w/o NetVLAD'' we replace NetVLAD with a max-pooling operation similar to DGCNN~\cite{wang2018dynamic}.
Table~\ref{tab:ablation-modules} shows the comparison results. The poor performance of ``w/o local features'' in mIoU accords well with our claim of the importance of local information in modeling fine-grained structures. Furthermore, we can see that the integration of local structures and global context in our PS$^2$-Net contribute to the improvements over ``w/o EdgeConv'' and ``w/o NetVLAD''.

\begin{figure}[t]
	\centering
	\includegraphics[scale=0.3]{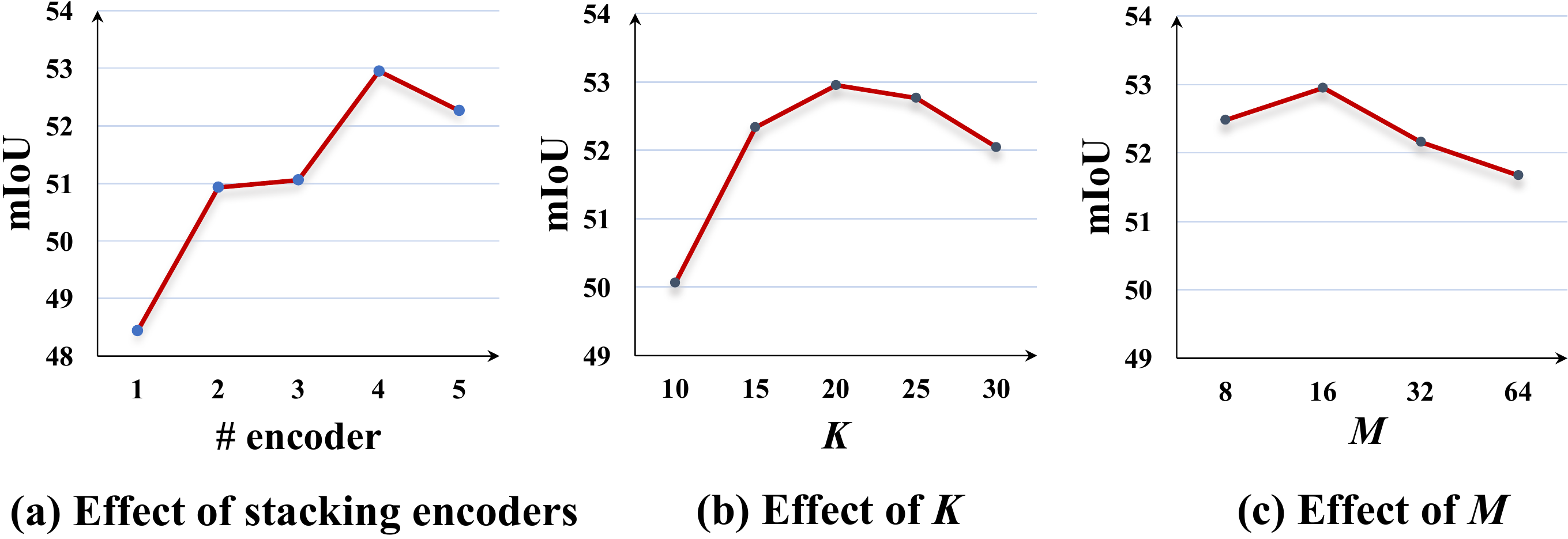}
	\caption{Effectiveness of hyper-parameters on S3DIS [A5].}
	\label{fig:hyperparamter}
\end{figure}

\paragraph{Number of encoders}  We evaluate the performance of our PS$^2$-Net with different number of stacked encoders, and show the results in Figure~\ref{fig:hyperparamter}(a). Generally, the network becomes deeper and is capable of learning more discriminative representations when the number of encoders increases. This can be seen from the improvement of performance when the number of encoders is increased from 1 to 4. However, we also observe that the performance begins to drop after a certain number of encoders. This might be due to a deeper network with more parameters requires more training data to prevent overfitting. In other words, a deeper network may fail to generalize well to new cases
when the training dataset is not sufficiently large. Hence, we use the best setting of 4 encoders in all our experiments.

\paragraph{Number of nearest neighbors} This hyper-parameter controls the range of local regions and thus influences the amount of local information included in our network. From Figure~\ref{fig:hyperparamter}(b), we can see a trade-off in selecting $K$. A small $K$ may lead to limited regions with insufficient local context, while a big $K$ may bring irrelevant noises and increases the computation complexity. 
We set $K$ to 20 in the experiments according to Figure~\ref{fig:hyperparamter}(b).

\paragraph{Number of clusters} 
As is revealed in Figure~\ref{fig:hyperparamter}(c), this hyper-parameter gives lower performances when $M$ is bigger than 16. It is likely because we reduce the dimension of $V$ from $(M \times 128)$-dim to 128-dim. If $M$ is too big, the dimension reduction operation may induce uncertain information loss. Consequently, we selected $M$=16 that achieved the best performance.

\section{Conclusion}
In this paper, we proposed the PS$^2$-Net, an end-to-end deep neural network for the 3D point cloud semantic segmentation task. PS$^2$-Net is built on four repeatedly stacked encoders, where each encoder has two basic components: EdgeConv and NetVLAD that capture local structures and global context, respectively. We provided proof to guarantee the permutation invariance property of our PS$^2$-Net. 
We obtained state-of-the-art performances with our locally and globally aware PS$^2$-Net on the two challenging 3D indoor scene datasets for point-based semantic segmentation.

{\small
\bibliographystyle{ieee}
\bibliography{reference}
}

\newpage

\begin{appendices}
\section{More Visualizations and Discussions on PS$^2$-Net Variants}
Fig.~\ref{fig:ablation-visualize} shows the visualizations of some qualitative results from the ablation studies - \textbf{``w/o Local''}, \textbf{``w/o EdgeConv''}, \textbf{``w/o NetVLAD''}, and  \textbf{``full PS$^2$-Net"} mentioned in Sec.~4.5 of our main paper. 
We have several interesting findings on the effectiveness of integrating the local and global contexts from these qualitative results:
\begin{itemize}
	\vspace{-0.05in}
	\item The method \textbf{``w/o Local''} (3rd column of Fig.~\ref{fig:ablation-visualize}) shows difficulties in capturing local structures, such as plane or corner, and convex or concave,
	without the exploitation of local context. It can be seen from the results that the method fails to differentiate between ``column'' from ``wall'', which tend to have similar color but different geometric structures.
	Additionally, the segmentation regions are not homogeneous because each point is processed independently in ``w/o Local''. See ``column'' in Row A, ``chair'' in Row B, and ``bookcase'' in Row E. 
	\vspace{-0.05in}
	\item In comparison to ``w/o Local'', \textbf{``w/o EdgeConv''} (4th column of Fig.~\ref{fig:ablation-visualize}) considers neighboring points to some extent. Hence, it performs slightly better in  capturing the geometric differences, which can be seen from the correct classification of ``column'' in Row B and E. However, despite the consideration of local regions, ``w/o EdgeConv'' still treats each point in its local region independently. Consequently, this leads to insufficient exploitation of the fine-grained local structures. As we can see from the wrongly segmented ``chair'' and ``clutter'' (on the ``table'') in Row B and D, ``w/o EdgeConv'' is confused by the neighboring points from the ``bookcase'' class. This indicates that it lacks the ability to capture complex geometry in classes such as ``clutter'' and ``chair''.
	\vspace{-0.05in}
	\item We observe that \textbf{``w/o NetVLAD''} (5th column of Fig.~\ref{fig:ablation-visualize}) fails to model scene-level semantic information without the aggregation of the global context.
	As shown in Row A, ``w/o NetVLAD'' is confused with the ``chairs'' and ``board'' classes, and this does not happen with the global context. Similar confusions can be seen in Row D and F, where the ``wall'' class is confused with the ``table'' and ``chair'' classes, respectively.
	\vspace{-0.05in}
	\item Our \textbf{full PS$^2$-Net} (6th column of Fig.~\ref{fig:ablation-visualize}) that incorporates both local structures and global context achieves the best segmentation results compared to the other methods that remove either the local or global context exploitation module. This suggests that the locally and globally aware framework we proposed is able to capture fine-grained local structures (\eg successful segmentation of ``column'' class in Row A-F, distinction between ``clutter'' and ``bookcase'' classes in Row B and D) as well as the global context (\eg distinction between the white ``wall'' and white ``board'' classes in Row B,C,D,F)
\end{itemize}

\begin{figure*}[h]
	\centering
	\includegraphics[scale=0.68]{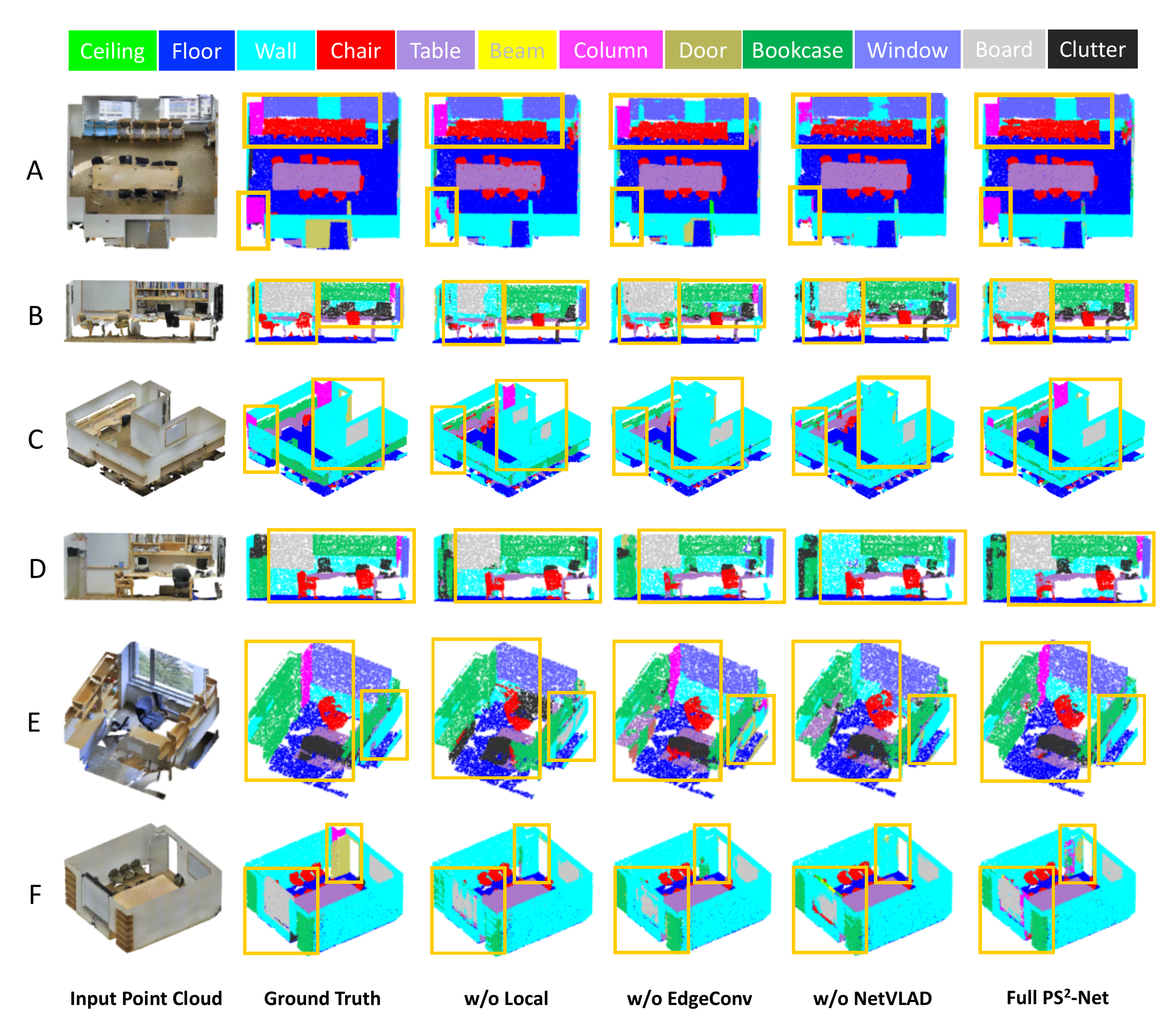}
	\caption{Visualized examples for ablation test of PS$^2$-Net variants on S3DIS [A5]. The interesting areas are highlighted by yellow bounding boxes. Best viewed in color.}
	\label{fig:ablation-visualize}
\end{figure*}

\section{More Discussions on Unbalance Dataset Problem}
The two datasets used in our experiments are very challenging as both of them are highly unbalanced. The statistics of data portion of the two datasets are given in Table~\ref{tbl:s3dis-statistics} and Table~\ref{tbl:scannet-statistics}, respectively.
As seen in Table~\ref{tbl:s3dis-statistics}, ``wall'', ``ceiling'', ``floor'' that are the dominant classes on S3DIS have 10 $\sim$ 66 times more data than the rare classes, such as  ``sofa'', ``board'', and ``beam''. This unbalance problem is more severe on the ScanNet dataset, where the most dominate class (\ie ``wall'') has over 200 times more training data than the rarest class (\ie ``shower curtain'').  
In the experiments, we did not implement any strategy (\eg over-sampling and weighted loss based on class frequency) to explicitly solve this problem. Interestingly, we still achieve acceptable performance on the rarest classes, such as ``column'', ``window'', ``board'' on the S3DIS dataset, and ``bathtub'', ``toilet'', ``window'', ``shower curtain'' on the ScanNet dataset. This may indicate that our proposed method is robust to unbalanced data distributions.

\begin{table*}[h]
	\centering
	\caption{Statistics of class-wise instance number and point portion on S3DIS dataset.}
	\scalebox{0.85}{
		\begin{tabular}{l|c c c c c c c c c c c c c}
			\hline\toprule[1pt]
			\textbf{Class name}& \textbf{ceiling} & \textbf{floor} &\textbf{wall} & \textbf{beam}& \textbf{column}& \textbf{window}& \textbf{door}& \textbf{table}& \textbf{chair}& \textbf{sofa}  &  \textbf{bookcase}&\textbf{board} & \textbf{clutter} \\\hline\toprule[0.5pt]
			\textbf{\# object} & 391 & 290 & 1,552 & 165 & 260 & 174 & 549 & 461 & 1,369 & 61 & 590 & 143 & ---\\\hline
			\textbf{Data percentage(\%)}  & 19.27 & 16.52 & 27.81 & 1.73 & 2.02 & 2.52 & 4.78 & 3.39 & 3.43 & 0.42 & 6.33 & 1.24 & 10.54 \\\hline\toprule[1pt]
		\end{tabular}
	}
	\label{tbl:s3dis-statistics}
\end{table*}

\begin{table*}[h]
	\centering
	\caption{Statistics of class-wise point portion on ScanNet dataset.}
	\scalebox{0.78}{
		\begin{tabular}{p{4cm} | p{1.2cm} p{1.2cm} p{1cm} p{1.2cm} p{1.2cm} p{1cm} p{2cm} p{1cm} p{1cm} p{1.5cm} p{1cm}}
			\hline\toprule[1pt]
			\textbf{Class name} & \textbf{wall} & \textbf{floor} &\textbf{chair} & \textbf{table}& \textbf{desk}& \textbf{bed}& \textbf{bookshelf}& \textbf{sofa}& \textbf{sink}  &\textbf{bathtub} & \textbf{toilet} \\\hline\toprule[0.5pt]
			\textbf{Train data percentage(\%)}  & 36.8 & 24.90 & 4.60 & 2.53 & 1.66 & 2.58 & 2.04 & 2.59 & 0.34 & 0.34 & 0.27\\\hline
			\textbf{Test data percentage(\%)}  & 36.46 & 24.38 & 5.36 & 2.90 & 1.65 & 2.14 & 2.15 & 2.44 & 0.33 & 0.22 & 0.26  \\\hline\toprule[1pt]
	\end{tabular}}
	\scalebox{0.78}{
		\begin{tabular}{p{4cm}| p{1.2cm} p{1.2cm} p{1cm} p{1.2cm} p{2.4cm} p{2cm} p{1cm} p{1.1cm} p{3cm}}
			\hline\toprule[1pt]
			\textbf{Class name} & \textbf{curtain} & \textbf{counter} & \textbf{door} & \textbf{window} & \textbf{shower curtain} & \textbf{refridgerator} & \textbf{picture} & \textbf{cabinet} & \textbf{other furniture} \\\hline\toprule[0.5pt]
			\textbf{Train data percentage(\%)} & 1.48 & 0.62 & 2.33 & 0.94 & 0.18 & 0.43 & 0.37 & 2.59 & 2.46\\\hline
			\textbf{Test data percentage(\%)}  & 0.98 & 0.65 & 2.15 & 0.66 & 0.10 & 0.34 & 0.17 & 2.43 & 3.23\\\hline\toprule[1pt]
	\end{tabular}}
	\label{tbl:scannet-statistics}
\end{table*}

\end{appendices}

\end{document}